\newtheorem{theorem}{Theorem}
\newtheorem{lemma}[theorem]{Lemma}
\newtheorem{proposition}[theorem]{Proposition}
\newtheorem{corollary}[theorem]{Corollary}
\newtheorem*{problem}{Problem}
\def\final{0}  
\newcommand{\lnote}[1]{[{\tiny Luis: \bf #1}]\marginpar{*}}
\newcommand{\nnote}[1]{[{\tiny navin: \bf #1}]\marginpar{*}}
\newcommand{\sidecomment}[1]{\marginpar{\tiny #1}}
\newcommand{\lnote}[1]{}
\newcommand{\nnote}[1]{}
\newcommand{\sidecomment}[1]{}
\newcommand{\RR}{\mathbb{R}}
\newcommand{\eps}{\epsilon}
\newcommand{\family}{\mathcal{P}}
\newcommand{\familyc}[1]{\mathcal{P}_{#1}}
\newcommand{\familycc}[2]{\mathcal{P}_{#1}^{#2}}
\newcommand{\suchthat}{\mathrel{:}}
\newcommand{\ones}{\mathbf{1}}
\newcommand{\card}[1]{\lvert#1\rvert}
\newcommand{\lrabs}[1]{\left\lvert{#1}\right\rvert}
\newcommand{\poly}{\mathrm{poly}}
\newcommand{\vol}[1]{\lvert#1\rvert}
\newcommand{\dist}{\mathrm{dist}}
\begin{document}
\title{Learning convex bodies is hard}

\author{Navin Goyal\\ Georgia Tech\\ navin001@gmail.com \and Luis Rademacher\\ Georgia Tech\\ lrademac@cc.gatech.edu}
\date{}
\maketitle
\begin{abstract}
We show that learning a convex body in $\RR^d$, given random samples
from the body, requires $2^{\Omega(\sqrt{d/\eps})}$ samples. By learning a convex body we mean
finding a set having at most $\eps$ relative symmetric difference with the input body.
To prove the lower bound we construct a hard to learn family of convex bodies.  
Our construction of this family is very simple and based on error correcting codes.  
\end{abstract}

\section{Introduction}
We consider the following problem: Given uniformly random points 
from a convex body in $\RR^d$, we would like to approximately learn the body with
as few samples as possible.  In this question, and throughout this paper, we are 
interested in the number of samples but not in the computational requirements for 
constructing such an approximation.  
Our main result will show that this needs about $2^{\Omega(\sqrt{d})}$ samples.
This problem is a special case of the statistical problem of inferring 
information about a probability distribution from samples. For example,
one can approximate the centroid of the body with a sample of size roughly
linear in $d$. On the other hand, a sample of size polynomial in $d$ is not 
enough to approximate the volume of
a convex body within a constant factor (\cite{Eldan09}, 
and see Section \ref{sec:discussion} here for a discussion). 
Note that known approximation algorithms
for the volume (e.g., \cite{dfk}) do not work in this setting as they need
a membership oracle and random points from various carefully chosen subsets of the 
input body.
 
Our problem also relates to work in learning theory (e.g., \cite{Vempala04, KlivansOS08}), where one is 
given samples generated according to (say) the Gaussian distribution and each sample is labeled 
``positive'' or ``negative'' depending on whether it belongs
to the body. Aside from different distributions, another difference between the learning setting of 
\cite{KlivansOS08} and ours is that in ours one gets only positive examples.  
Klivans~et~al.~\cite{KlivansOS08}
give an algorithm and a nearly matching lower bound for learning convex bodies 
with labeled samples chosen according to the Gaussian distribution.  Their algorithm takes time 
$2^{\tilde{O}(\sqrt{d})}$ and they also show a lower bound of $2^{\Omega(\sqrt{d})}$.

The problem of learning convex sets from uniformly random samples from them was raised by
Frieze~et~al.~\cite{FriezeJK96}.  They gave a polynomial time algorithm for learning parallelopipeds.  
Another somewhat related direction is the work on the learnability of discrete distributions by
Kearns~et~al.~\cite{Kearnsetal94}.

Our lower bound result (like that of \cite{KlivansOS08}) also allows for membership oracle queries. 
Note that it is known that estimating the volume of convex bodies requires an exponential number of 
membership 
queries if the algorithm is \emph{deterministic} \cite{BaranyF87}, which implies that learning bodies
requires an exponential number of membership queries because if an algorithm can learn the body then it
can also estimate its volume. 

To formally define the notion of learning we need to specify a distance 
$d(\cdot, \cdot)$ between bodies.  A natural choice 
in our setting is to consider the total variation distance of the uniform distribution on each body
(see Section \ref{sec:preliminaries}).

We will use the term \emph{random oracle of a convex body $K$} for a black box that 
when queried outputs a uniformly random point from $K$.

\begin{theorem} \label{thm:main}
There exists a distribution $\mathcal{D}$ on the set of convex bodies in $\RR^d$
satisfying the following: Let $\mathrm{ALG}$ be a randomized algorithm that,
given a random convex body $K$ according to $\mathcal{D}$, makes
at most $q$ total queries to random and membership oracles of $K$ and outputs a
set $C$ such that, for $8/d \leq \eps \leq 1/8$,
\[
\Pr (d(C, K) \leq \eps) \geq 1/2
\]
where the probability is over $K$, the random sample and any randomization by $ALG$. Then
\[
    q \geq 2^{\Omega(\sqrt{d/\eps}) }.
\]
\end{theorem}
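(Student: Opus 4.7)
My plan is to use Yao's principle with an explicit hard distribution $\mathcal{D}$ built from an error-correcting code. Fix an angle $\theta=\Theta(\sqrt{\eps})$ and a spherical packing $p_1,\ldots,p_N \in S^{d-1}$ with pairwise angular distance at least $2\theta$; by the usual estimate $N=\Theta(\sqrt d/\theta^{d-1})$. Let $C_i = \{x\in B^d : \langle x,p_i\rangle \geq \cos\theta\}$ be the corresponding cap and $H_i = \{x : \langle x,p_i\rangle \leq \cos\theta\}$ the halfspace cutting it off; the angular separation ensures that the caps are pairwise disjoint and that no $H_i$ meets any other $C_j$. Take a binary code $\mathcal C\subseteq \{0,1\}^N$ of constant rate $R$ and constant relative distance $\delta$ (guaranteed by Gilbert--Varshamov) and for each codeword $c\in \mathcal C$ set $K_c = B^d \cap \bigcap_{i:\,c_i=0} H_i$. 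Each $K_c$ is convex, $K_c \triangle K_{c'}$ is the disjoint union of caps where $c$ and $c'$ differ, and a standard cap-volume computation gives $v = |C_i| \approx V^* \theta^{d+1}/\sqrt d$, so $Nv/V^* = \Theta(\theta^2) = \Theta(\eps)$. Define $\mathcal D$ to be uniform on $\{K_c : c\in \mathcal C\}$.

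Since $d(K_c,K_{c'}) \geq \delta Nv/(2V^*) > 2\eps$ for every $c\neq c' \in \mathcal C$ (after fixing the constants in $\theta$), an algorithm that outputs a set within $\eps$ of its input must, with probability $\geq 1/2$, identify the codeword $c$. For the sample-complexity lower bound I would apply Fano's inequality. A single random sample $X\sim U(K_c)$ satisfies $I(c;X)=O(Nv/V^*)=O(\eps)$: with probability $1-\Theta(\eps)$ it lies in the common core $B^d\setminus \bigcup_i C_i$, whose density is essentially the same in all $K_c$ and so reveals nothing, and otherwise it lies in exactly one cap $C_i$ and deterministically reveals the bit $c_i=1$. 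Each membership query contributes at most one bit of information about $c$. Combining the chain rule with Fano, if $\mathrm{ALG}$ uses $q_r$ random samples and $q_m$ membership queries and identifies $c$ with probability $\geq 1/2$, then
\[
q_m + O(\eps)\,q_r \;\geq\; \tfrac12 \log|\mathcal C| - 1 \;=\; \Omega(RN),
\]
so $q = q_m+q_r \geq \Omega(N)$. Finally $\log N = \Theta\!\big((d/2)\log(1/\eps)\big) \geq \Omega(\sqrt{d/\eps})$ throughout the range $8/d\leq \eps \leq 1/8$, which yields $q\geq 2^{\Omega(\sqrt{d/\eps})}$.

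I expect the main obstacle to be the geometric bookkeeping in the construction: verifying (uniformly in $\eps$) that with angular separation $2\theta$ the halfspace $H_i$ removes $C_i$ cleanly without touching any $C_j$, that the cap-volume estimate $v/V^* \sim \theta^{d+1}/\sqrt d$ is valid precisely in the regime $\theta=\Theta(\sqrt\eps)$ we need, and that the small variation of $|K_c|$ with the Hamming weight of $c$ does not degrade either the pairwise distance computation or the mutual-information bound. The information-theoretic step is then a standard application of Fano once these geometric estimates are in place.
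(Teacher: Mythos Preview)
Your argument has a genuine quantitative gap in the central geometric estimate. You assert that with angular separation $2\theta$ one can take $N=\Theta(\sqrt d/\theta^{d-1})$ pairwise disjoint caps and that then $Nv/V^*=\Theta(\theta^2)=\Theta(\eps)$. The value $\sqrt d/\theta^{d-1}$ is (up to constants) the \emph{trivial volume upper bound} on the number of disjoint caps, i.e.\ it corresponds to packing density $\Theta(1)$; it is not achievable. A Gilbert--Varshamov packing gives only $N=\Theta(\sqrt d/(2\theta)^{d-1})$, and more generally any packing of caps of radius $\theta$ on $S^{d-1}$ has density $N\mu(\theta)\le 2^{-\Omega(d)}$ once $\theta$ is bounded away from $\pi/2$ (for small $\theta$ this is just the Euclidean sphere--packing bound). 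Since $v/V^*=\mu(\theta)\cdot\theta^2\,(1+o(1))$, this forces
\[
\frac{Nv}{V^*}\;=\;N\mu(\theta)\cdot\theta^2\,(1+o(1))\;\le\;2^{-\Omega(d)},
\]
not $\Theta(\eps)$. Consequently every $K_c$ satisfies $\dist(K_c,B^d)\le 2^{-\Omega(d)}\ll\eps$, so an algorithm that simply outputs $B^d$ is within $\eps$ of every body in your family; the Fano step never engages because $\eps$-learning does not force identification of the codeword. The ``main obstacle'' you anticipated is in fact fatal rather than mere bookkeeping: there is no choice of $\theta$ that simultaneously makes the caps disjoint, makes the pairwise distance $\Theta(\eps)$, and makes $N$ large.

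This is exactly the obstruction the paper's construction is designed to circumvent. The inner family uses the cross polytope with peaks rather than the ball with caps: the $2^n$ peaks sit on the $2^n$ facets and fit together along facet boundaries, so they are disjoint yet jointly occupy a $\Theta(1/n)$ fraction of the volume (round caps on a ball cannot do this because they do not tile). That only yields pairwise distance $\Theta(1/n)$, so the paper then takes $k$-fold Cartesian products of these bodies to amplify the distance to $1-e^{-k/(16n)}>2\eps$, and the lower bound is obtained by a decision-tree/Yao counting argument rather than Fano. If you want to salvage a cap-based approach you would likewise need an amplification step; a single-level family of ball-minus-caps bodies cannot give the required $\Theta(\eps)$ separation.
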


Remarkably, the lower bound of Klivans~et~al.~\cite{KlivansOS08} is numerically essentially identical to 
ours ($2^{\Omega(\sqrt{d})}$ for $\eps$ constant).  Constructions similar to theirs are possible for our
particular scenario~\cite{Kalai06}.  We believe that our argument is considerably simpler, and
elementary compared to that of \cite{KlivansOS08}.  Furthermore, our construction of the hard to learn family 
is explicit.  Our construction makes use of error correcting codes.  To our knowledge, this connection with 
error correcting codes is new in such contexts and may find further
applications.  See Section~\ref{sec:discussion} for some further comparison.

\paragraph{An informal outline of the proof.} The idea of the proof is to find a large family of convex
bodies in $\RR^n$ satisfying two conflicting goals:
(1) Any two bodies in the family are almost disjoint; (2) and yet they look alike in the sense
that a small sample of random points from any such body
is insufficient for determining which one it is. Since any two bodies
are almost disjoint, even approximating a body would allow one to
determine it exactly. This will imply that it is also hard
to approximate.

We first construct a family of bodies that although not almost disjoint, have sufficiently large
symmetric difference.  We will then be able to construct a family with almost disjoint bodies by
taking products of bodies in the first family.

The first family is quite natural (it is described formally in Sec.~\ref{sec:inner}).
Consider the cross polytope $O_n$ in $\RR^n$ (generalization of the octahedron to $n$ dimensions:
convex hull of the vectors $\{\pm e_i: i \in [n]\}$, where $e_i$ is the unit vector in $\RR^n$ with the 
$i$th coordinate $1$ and the rest $0$).  A \emph{peak} attached to a facet $F$ of $O_n$
is a pyramid that has $F$ as its base and has its other vertex outside $O_n$ on the normal to $F$ going 
through its centroid.
If the height of the peak is sufficiently small then attaching peaks to any subset of the $2^n$
facets will result in a convex polytope.  We will show later that we can choose the height so that
the volume of all the $2^n$ peaks is $\Omega(1/n)$ fraction of the volume of $O_n$.
We call this family of bodies $\family$.  [We remark that our construction of cross-polytopes with peaks
has resemblance to a construction in \cite{RademacherV04} with different parameters, but there does not 
seem to be any connection between the problem studied there and the problem we are interested in.]

Intuitively, a random point in a body from this family tells one that if the
point is in one of the peaks then that peak is present,
otherwise one learns nothing.  Therefore if the number of queries
is at most a polynomial in $n$, then one learns nothing about most of
the peaks and so the algorithm cannot tell which body it got.

But these bodies do not have large symmetric difference
(can be as small as a $O(1/(n2^n))$ fraction of the cross polytope if the two bodies differ in just
one peak)
but we can pick a subfamily of them having pairwise symmetric difference
at least $\Omega(1/n)$ by picking a large random subfamily.  We will do it slightly differently
which will be more convenient for the proof: Bodies in $\family$ have one-to-one correspondence with
binary strings of length $2^n$: each facet corresponds to a coordinate of the string which takes value
$1$ if that facet has a peak attached, else it has value $0$.  To ensure that any two bodies in 
our family differ in many peaks it suffices to ensure that their corresponding strings have large
Hamming distance.  Large sets of such strings are of course furnished by good error correcting codes.

From this family we can obtain another family of almost disjoint bodies by
taking products, while preserving the property that polynomially many random samples do not tell the
bodies apart.  This product trick (also known as tensoring) has been used many times before,
in particular for amplifying hardness, but we are not aware of its use in a setting similar to ours. 
Our construction of the product family also resembles the operation of concatenation in coding theory. 

\textbf{Acknowledgments.} We are grateful to Adam Kalai and Santosh Vempala
for useful discussions.

\section{Preliminaries}\label{sec:preliminaries}
Let $K,L \subseteq \RR^n$ be bounded and measurable. We 
define a distance $\dist(K,L)$ as the total variation distance between the
uniform distributions in $K$ and $L$, that is,
\[
    \dist(K,L) =
    \begin{cases}
        \frac{\vol{K \setminus L}}{\vol K} & \text{if $\vol K \geq \vol L$} \\
        \frac{\vol{L \setminus K}}{\vol L} & \text{if $\vol L > \vol K$.}
    \end{cases}
\]

We will use $\vol{A}$ to denote the volume of sets $A \subset \RR^n$, and also to denote the cardinality 
of finite sets $A$; which one is meant in a particular case will be clear from the context.  

Let $\ones$ denote the vector $(1, \dotsc, 1)$.  ``$\log$'' denotes logarithm with base $2$. 

We will need some basic definitions and facts from coding theory; see, e.g., \cite{vanLint}.
For a finite alphabet $\Sigma$, and word length $n$, a \emph{code} $C$ is a subset of $\Sigma^n$.
For any two codewords $x, y \in C$, distance $\dist(x,y)$ between them is defined by
$\dist(x,y):= |\{i \in [n] :  x_i \neq y_i\}|$. 
The \emph{relative minimum distance} for code $C$ is
$\min_{x, y \in C, x \neq y}\dist(x,y)/n$.  For $\Sigma = \{0,1\}$, the \emph{weight} of a codeword $x$ is
$|\{i \in [n]\;:\; x_i \neq 0\}|$.
Define $V_q(n,r) := \sum_{i=0}^{r} \binom{n}{i}(q-1)^i$.  The following is well-known and easy to prove:
\begin{theorem}[Gilbert--Varshamov] \label{thm:GV}
For alphabet size $q$, code length $n$, and minimum distance $d$, there exists a code of size at least
$q^n/V_q(n,d-1)$.
\end{theorem}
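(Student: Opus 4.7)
The plan is to use a standard greedy (covering) argument. I would construct the code $C \subseteq \Sigma^n$ iteratively: start with $C = \emptyset$ and, as long as there exists some $x \in \Sigma^n$ whose Hamming distance to every element already in $C$ is at least $d$, add such an $x$ to $C$. The process terminates since $\Sigma^n$ is finite. By construction every pair of codewords has distance at least $d$, so the resulting $C$ has minimum distance at least $d$.

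Next I would lower-bound $\card C$ by a volume (covering) argument. When the procedure halts, every word $y \in \Sigma^n$ lies within Hamming distance at most $d-1$ of some codeword (otherwise we could still add $y$). Hence the Hamming balls of radius $d-1$ centered at the codewords cover $\Sigma^n$. Each such ball has size exactly
\[
    \sum_{i=0}^{d-1} \binom{n}{i}(q-1)^i = V_q(n,d-1),
\]
because choosing a word at Hamming distance exactly $i$ from a fixed center amounts to choosing the $i$ coordinates in which to differ and then choosing one of the $q-1$ alternative symbols at each such coordinate. Combining the covering inequality $\card C \cdot V_q(n,d-1) \geq q^n$ with the minimum-distance property gives $\card C \geq q^n/V_q(n,d-1)$, as required.

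There is no real obstacle here: the argument is the classical greedy/sphere-covering proof, and the only thing to verify carefully is the formula for the volume of a Hamming ball, which is an elementary counting identity. I would simply present the greedy construction, the termination/covering observation, and the ball-size computation in that order.
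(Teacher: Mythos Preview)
Your proposal is correct and is exactly the classical greedy sphere-covering argument for the Gilbert--Varshamov bound. The paper itself does not actually give a proof of this statement; it merely introduces it with ``The following is well-known and easy to prove'' and cites a standard coding-theory reference, so there is nothing to compare against beyond noting that your argument is the standard one the authors had in mind.
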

When the alphabet is $\Sigma = \{0,1\}$, we define the \emph{complement} $\bar c$
of a codeword $c \in C$ as $\bar c_i := 1-c_i$.

\section{A hard to learn family of convex bodies}

The construction proceeds in two steps.  In the first step we construct a large subfamily of
$\family$ such that the
relative pairwise symmetric difference between the bodies is $\Omega(1/n)$.  This symmetric
difference is however not sufficiently large for our lower bound.  The second step of the
construction amplifies the symmetric difference by considering products of the bodies from the
first family.

\subsection{The inner family: Cross-polytope with peaks} \label{sec:inner}

We first construct a family with slightly weaker properties.
The family consists of what we call ``cross polytope with peaks''.
The $n$-dimensional cross polytope $O_n$ is the convex hull of the
$2n$ points $\{ \pm e_i, i \in [n]\}$. Let $F$ be a facet of $O_n$, and
let $c_F$ be the center of $F$. The \emph{peak} associated to $F$
is the convex hull of $F$ and the point $\alpha c_F$, where $\alpha > 1$
is a positive scalar defined as follows: $\alpha$ is picked as large as possible
so that the union of the cross polytope and all $2^n$ peaks is a convex
body. A \emph{cross polytope with peaks} will then be the union of the cross
polytope and any subfamily of the $2^n$ possible peaks. The set of all
$2^{2^n}$ bodies of this type will be denoted $\family$.
By fixing of an
ordering of the facets of the cross polytope, there is a one-to-one
correspondence between the cross polytope with peaks and 0--1 vectors with
$2^n$ coordinates.

Let $P$ denote the cross polytope with all $2^n$ peaks.
We will initially choose $\alpha$ as large as possible so that the
following condition---necessary for convexity of $P$ but not clearly
sufficient---is satisfied: for every pair of adjacent facets $F$, $G$ of $O_n$,
the vertex of each peak is in the following halfspace:
the halfspace containing $O_n$ and whose boundary is the hyperplane
orthogonal to the (vector connecting the origin to the) center of $F \cap G$, and containing $F \cap G$.
A straightforward computation shows that $\alpha = n/(n-1)$ for this condition.
This implies by another easy computation that the volume of all the peaks is
$\vol{O_n}/(n-1)$. 
We will now show that this weaker condition on $\alpha$ is actually sufficient
for the convexity of $P$ and any cross polytope with peaks.

\begin{proposition}
Every set in $\family$ is convex.
\end{proposition}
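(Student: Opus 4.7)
My plan is to give an explicit halfspace description of each $S \in \family$, from which convexity is immediate. Index the facets of $O_n$ by sign vectors: write $F_\epsilon = \operatorname{conv}\{\epsilon_i e_i : i \in [n]\}$ for $\epsilon \in \{\pm 1\}^n$, with centre $c_\epsilon = \frac{1}{n}\sum_i \epsilon_i e_i$, apex $v_\epsilon = \alpha c_\epsilon$, and peak $P_\epsilon = \operatorname{conv}(F_\epsilon \cup \{v_\epsilon\})$. A set in $\family$ then has the form $S = O_n \cup \bigcup_{\epsilon \in A} P_\epsilon$ for some $A \subseteq \{\pm 1\}^n$. I will show
\[
S = \bigcap_{\epsilon \notin A} \bigl\{x \suchthat \textstyle\sum_i \epsilon_i x_i \leq 1\bigr\} \;\cap\; \bigcap_{\substack{\epsilon \in A \\ k \in [n]}} \bigl\{x \suchthat \textstyle\sum_{i \neq k} \epsilon_i x_i \leq 1\bigr\};
\]
call this intersection $T$. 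The halfspaces in the first family are the supporting halfspaces of the facets of $O_n$ that do not carry a peak, and those in the second family are exactly the ``neighbour halfspaces'' introduced in the preceding paragraph (their boundaries carry the ridges $F_\epsilon \cap F_{\epsilon'}$ where $\epsilon'$ differs from $\epsilon$ only in the $k$-th coordinate).

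For $S \subseteq T$ I would check each defining inequality on the finitely many vertices of $O_n$ and of each active peak. The cross-polytope vertices $\pm e_i$ satisfy every inequality of the form $\sum_j \sigma_j x_j \leq 1$ with $\sigma_j \in \{-1,0,1\}$. For an apex $v_{\epsilon^\circ}$ with $\epsilon^\circ \in A$, the computation $\sum_{i \neq k} \epsilon_i (v_{\epsilon^\circ})_i = (\alpha/n)\sum_{i \neq k} \epsilon_i \epsilon^\circ_i \leq \alpha(n-1)/n = 1$ handles every neighbour halfspace---this is literally the same tightness computation that fixed $\alpha = n/(n-1)$---and for $\epsilon \notin A$ one uses $\epsilon \neq \epsilon^\circ$ (so $\sum_i \epsilon_i \epsilon^\circ_i \leq n-2$) to obtain strict slack $\alpha(n-2)/n = (n-2)/(n-1) < 1$ in the corresponding first-family halfspace.

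For $T \subseteq S$, take $x \in T$ with $\sum_i |x_i| > 1$ (otherwise $x \in O_n$ already). Pick $\epsilon^* \in \{\pm 1\}^n$ with $\epsilon^*_i x_i = |x_i|$ coordinate-wise, breaking ties on any zero coordinate arbitrarily. Then $\sum_i \epsilon^*_i x_i = \sum_i |x_i| > 1$, so $x$ violates the first-family inequality for $\epsilon^*$; membership in $T$ thus forces $\epsilon^* \in A$. The remaining defining inequalities $\sum_{i \neq k} \epsilon^*_i x_i \leq 1$ for $k \in [n]$, combined with $\sum_i \epsilon^*_i x_i \geq 1$, are precisely the $n+1$ facet inequalities of the $n$-simplex $P_{\epsilon^*}$ (the ones opposite the apex $v_{\epsilon^*}$ and opposite each base vertex $\epsilon^*_k e_k$), so $x \in P_{\epsilon^*} \subseteq S$.

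The only non-routine step is identifying the halfspace description itself---distinguishing the facets of $O_n$ that survive as facets of $S$ from those replaced by the $n$ triangular side faces of a peak. Once it is in hand, the forward direction is nothing but the apex calculation already built into the choice of $\alpha$, and the backward direction reduces to recognising the defining inequalities of $T$ as the facet inequalities of the simplex $P_{\epsilon^*}$.
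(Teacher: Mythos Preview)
Your proof is correct and follows essentially the same approach as the paper: exhibit a halfspace description of $S$ built from the ``neighbour'' halfspaces $\{\sum_{i\neq k}\epsilon_i x_i\le 1\}$ (together with the surviving facet halfspaces of $O_n$), and verify it by showing that any $x\notin O_n$ lands in the appropriate peak. The only cosmetic differences are that the paper first treats the full-peak body $P$ and then obtains a general $S$ by intersecting away the missing peaks, and that it certifies $x\in P_{\epsilon^*}$ by writing down an explicit convex combination of the simplex vertices rather than by checking the simplex's $n+1$ facet inequalities as you do.
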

\begin{proof}
Let $Q$ be
the intersection of all halfspaces of the form
\[
    \{ x \in \RR^n \suchthat a \cdot x \leq 1 \}
\]
where $a \in \RR^n$ is a vector having entries in $\{-1,0,1\}$ and exactly one
zero entry. Equivalently, the boundary of each such halfspace is a hyperplane
orthogonal to the center of some $(n-2)$-dimensional face of $O_n$ and containing
that face.

In the rest of the proof we will show that $P = Q$, which gives the convexity
of $P$. This equality implies
that a cross polytope with only some peaks is also convex: any such body
can be obtained from $P$ by intersecting $P$ with the hafspaces induced by the
facets of $O_n$ associated to the missing peaks, and it is easy to see from the
definition of the peaks that each such intersection removes exactly one peak.

It is clear that $P \subseteq Q$.
For the other inclusion, let $x \in Q$.
By symmetry we can assume $x \geq 0$. If $\sum x_i \leq 1$, then
$x \in O_n \subseteq P$. If $\sum x_i > 1$, we will show that $x$ is in the peak
of the positive orthant. We would like to write $x$ as a convex combination of
$e_1, \dotsc, e_n$ and the extra vertex of the peak, $v = \ones /(n-1)$.
Let $\mu = (n-1) ((\sum x_i) -1) > 0$. We want a vector
$\lambda = (\lambda_1, \dotsc, \lambda_n)$ such that $x$ is a convex
combination of the vertices of the peak:
\[
x = \mu v + \sum \lambda_i e_i  = \mu v + \lambda,
\]
that is, $\lambda = x + \ones - \ones \sum x_i$. It satisfies
\[
\mu + \sum \lambda_i = 1
\]
and $\lambda_i = x_i + 1 - \sum x_i$, and this is non-negative:
By definition of $Q$ we have for all $j \in [n]$
\[
\sum x_i \leq 1 + x_j.
\]
This shows that $x$ belongs to the peak in the positive orthant.
\end{proof}

For notational convenience we let $N := 2^n$.
Recall that we identify bodies in $\family$ with binary strings in $\{0,1\}^N$.
Let $C \subset \{0, 1\}^{N}$ be a code with relative
minimum distance at least $1/4$.
To simplify computations involving distance ``$\dist$'' between bodies,
it will be convenient to have the property that all codewords in $C$
have weight $N/2$.  We can ensure this easily as follows.
Let $\tilde{C} \subset \{0, 1\}^{N/2}$
be a code with relative minimum distance at least $1/4$,
then set $C := \{(c, \bar{c}): c \in \tilde{C}\}$.
Clearly $|C| = |\tilde{C}|$.
By Theorem~\ref{thm:GV} we can choose $\tilde{C}$ such that $|\tilde{C}| \geq 2^{c_1 N}$,
for a positive constant $c_1$.  We fix $C$ to be such a code, i.e. a code with
relative minimum distance at least $1/4$, size $2^{c_1N}$, and all codewords with weight $N/2$.

We define the family $\familyc{C}$ as the family consisting of  bodies in $\family$ corresponding to
codewords in $C$. As all codewords in $C$ have the same weight,
we have that all bodies in $\familyc{C}$ have the same volume.
Recall that the volume of each peak is $\frac{\vol{O_n}}{2^n(n-1)}$.  Therefore
for distinct $P, Q \in \familyc{C}$ the volume of the symmetric difference of
$P$ and $Q$ is at least $\frac{\vol{O_n}}{4(n-1)}$.  

\subsection{The outer family: The product construction}

Let $C'$ be a code with codewords of length $k$ and minimum distance at least $k/2$ on the alphabet
$\familyc{C}$.  That is, codewords in $C'$
can be represented as $(B_1, \ldots, B_k)$, where $B_i \in \familyc{C}$ for $i=1, \ldots, k$.
The product family $\familycc{C}{C'}$ corresponding to code $C'$, has $|C'|$ bodies in $\RR^{kn}$,
one for each codeword.  The body for codeword $(B_1, \ldots, B_k) \in C'$ is simply
$B_1 \times \ldots \times B_k$.

Clearly $|\familycc{C}{C'}| = |C'|$.  Using Theorem~\ref{thm:GV} we can choose $C'$ such that
$|C'| \geq q^k/V_{q}(k,k/2)$.  Now note that
\begin{align*}
V_q(k, k/2) &= \sum_{i=0}^{k/2} \binom{k}{i}(q-1)^i \leq (q-1)^{k/2} \sum_{i=0}^{k/2} \binom{k}{i} \\
&< 2^{k} (q-1)^{k/2} < (4q)^{k/2}.
\end{align*}

Therefore $q^k/V_q(k, k/2) > q^k/(4q)^{k/2} = (q/4)^{k/2}$.
Setting $q = 2^{c_1 N}$, we get $|C'| > 2^{(c_1N -2)k/2} > 2^{c_2kN}$, for constant $c_2>0$, assuming $N$ is
sufficiently large. We just showed:
\begin{lemma}\label{lem:cardinality}
$\card{\familycc{C}{C'}} > 2^{c_2 k 2^n}$.
\end{lemma}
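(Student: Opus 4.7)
The plan is to read off the bound directly from the Gilbert--Varshamov guarantee applied to $C'$, combined with the cardinality of the inner family $\familyc{C}$ established in the previous subsection.

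First I would observe that the map $(B_1,\dotsc,B_k) \mapsto B_1 \times \dotsb \times B_k$ from $C'$ to $\familycc{C}{C'}$ is a bijection by definition, so $\card{\familycc{C}{C'}} = \card{C'}$, and it suffices to lower bound $\card{C'}$. Then I would apply Theorem~\ref{thm:GV} with alphabet $\familyc{C}$ of size $q := \card{\familyc{C}} \geq 2^{c_1 N}$, word length $k$, and required minimum distance $k/2$, which guarantees the existence of a code $C'$ with $\card{C'} \geq q^k / V_q(k, k/2-1) \geq q^k/V_q(k,k/2)$.

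Next I would bound the volume $V_q(k,k/2)$ by a simple term-by-term estimate: since each summand $\binom{k}{i}(q-1)^i$ for $0 \leq i \leq k/2$ is at most $\binom{k}{i}(q-1)^{k/2}$, we get $V_q(k,k/2) \leq (q-1)^{k/2} \sum_{i=0}^{k/2}\binom{k}{i} \leq 2^k (q-1)^{k/2} < (4q)^{k/2}$. Plugging this back in yields $\card{C'} > q^k/(4q)^{k/2} = (q/4)^{k/2}$. Substituting $q = 2^{c_1 N}$ from the construction of $\familyc{C}$ in Section~\ref{sec:inner} gives $\card{C'} > 2^{(c_1 N - 2)k/2}$, and for $N$ sufficiently large the $-2$ can be absorbed into a slightly smaller constant $c_2 > 0$, yielding $\card{\familycc{C}{C'}} > 2^{c_2 k N} = 2^{c_2 k 2^n}$ as desired.

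There is really no obstacle here: the lemma is essentially a bookkeeping step that chains Gilbert--Varshamov with the exponential size of $\familyc{C}$. The only thing worth flagging is making sure $c_2$ is chosen independently of $k$ and $n$ (it only needs $N = 2^n$ large enough to dominate the additive constant $-2$ in the exponent), so that the bound is genuinely exponential in $k 2^n$.
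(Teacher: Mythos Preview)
Your proposal is correct and follows essentially the same argument as the paper: apply Gilbert--Varshamov to the outer code $C'$ over the alphabet $\familyc{C}$ of size $q = 2^{c_1 N}$, bound $V_q(k,k/2) < (4q)^{k/2}$ term by term, and absorb the $-2$ in the exponent into a smaller constant $c_2$ for $N$ large enough.
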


The following lemma shows that the bodies in $\familycc{C}{C'}$ are almost pairwise disjoint.
\begin{lemma} \label{lem:distance}
For distinct $A, B \in \familycc{C}{C'}$ we have
\[
\dist(A,B) = 1-\frac{|A \cap B|}{|A|} > 1 - e^{-k/(16 n)}.
\]
\end{lemma}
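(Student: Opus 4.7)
The plan is to exploit the product structure of bodies in $\familycc{C}{C'}$ together with the per-coordinate symmetric difference bound established for the inner family $\familyc{C}$.

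First, I would note that since every codeword of the inner code $C$ has weight $N/2$, all bodies in $\familyc{C}$ share a common volume, and hence all bodies in $\familycc{C}{C'}$ share a common volume as well. This means $\vol A = \vol B$, so the distance reduces directly to $\dist(A,B) = 1 - \vol{A\cap B}/\vol A$, which explains the form of the statement. Next, writing $A = B_1 \times \dotsb \times B_k$ and $B = B'_1 \times \dotsb \times B'_k$ with $(B_i), (B'_i) \in C'$, the product structure gives
\[
\frac{\vol{A \cap B}}{\vol A} = \prod_{i=1}^k \frac{\vol{B_i \cap B'_i}}{\vol{B_i}}.
\]
So the task is to bound each factor and combine them using the minimum distance of $C'$.

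For each coordinate $i$ where $B_i = B'_i$ the factor is $1$. For coordinates where $B_i \neq B'_i$, I would use the computation from Section~\ref{sec:inner}: the $N/2$ peaks of each body contribute total volume $\vol{O_n}/(2(n-1))$, so $\vol{B_i} = \vol{O_n}(2n-1)/(2(n-1))$, while the symmetric difference is at least $\vol{O_n}/(4(n-1))$, giving $\vol{B_i \setminus B'_i} \geq \vol{O_n}/(8(n-1))$ because $\vol{B_i}=\vol{B'_i}$. Dividing yields
\[
\frac{\vol{B_i \cap B'_i}}{\vol{B_i}} \leq 1 - \frac{1}{4(2n-1)} \leq 1 - \frac{1}{8n}.
\]
Since $C'$ has minimum distance $\geq k/2$, at least $k/2$ indices contribute such a factor.

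Combining, $\vol{A\cap B}/\vol A \leq (1 - 1/(8n))^{k/2} \leq e^{-k/(16n)}$, and the conclusion follows. The step most deserving of care is the per-coordinate bound, since one must be consistent about how the symmetric difference volume from Section~\ref{sec:inner} translates into the one-sided quantity $\vol{B_i\setminus B'_i}$ and then into the normalized ratio; the rest is routine once this is pinned down.
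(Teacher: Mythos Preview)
Your proposal is correct and follows essentially the same route as the paper: equal volumes reduce $\dist$ to $1-\vol{A\cap B}/\vol{A}$, the product factorizes the ratio, the minimum distance of $C'$ guarantees at least $k/2$ nontrivial factors, and each such factor is bounded by $1-1/(8n)$, yielding $e^{-k/(16n)}$. The only cosmetic difference is in the per-coordinate step: the paper directly counts that $A_i\cap B_i$ has at most $3\cdot 2^n/8$ peaks (using that both codewords have weight $N/2$ and Hamming distance $\ge N/4$) to get the ratio $(1+3/(8(n-1)))/(1+1/(2(n-1)))=1-1/(8n-4)$, whereas you reach the same number via the symmetric-difference bound from Section~\ref{sec:inner} and halving.
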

\begin{proof}

We constructed $\familycc{C}{C'}$ so that all bodies in it have the same volume. 
This implies
\[
    \dist(A,B) = 1-\frac{|A \cap B|}{|A|}.
\]
Let $A = A_1 \times \ldots \times A_k$ and $B = B_1 \times \ldots \times B_k$.  Then
\[
\frac{|A \cap B|}{|A|} = \frac{|A_1 \cap B_1| \times \ldots \times |A_k \cap B_k|}{|A_1|\times \ldots
\times |A_k|}.
\]

Since the minimum relative distance in $C$ is at least $1/4$
and the weight of each codeword is $N/2$, we have that for $A_i \neq B_i$ the number of peaks
in $A_i \cap B_i$ is at most $2^n \cdot 3/8$. Hence 
\[
\frac{|A_i\cap B_i|}{|A_i|} \leq \frac{1+3/(8(n-1))}{1+1/(2(n-1))}.
\]
Since the minimum distance of $C'$ is at least $k/2$, we have $A_i \neq B_i$ for at least $k/2$ values of $i$
in $[k]$.  Therefore we get
\begin{align*}
\frac{|A \cap B|}{|A|} &\leq \left(\frac{1+3/(8(n-1))}{1+1/(2(n-1))}\right)^{k/2} \\
&\leq \left(1-\frac{1}{8n}\right)^{k/2} < e^{-k/(16 n)} .
\end{align*}
\end{proof}

\section{Proof of the lower bound}

\begin{proof}[Proof of Theorem \ref{thm:main}.]
We will make use of the family $\familycc{C}{C'}$ that we constructed in the previous section. 
Recall that the bodies in this family live in $\RR^d$, for $d := kn$.  For this proof we will think
of $d$ as fixed and we will choose $n$ appropriately for the lower bound proof.
By a straightforward but tedious argument it is enough
to prove the theorem assuming that $d$ is a power of 2. 

We will use Yao's principle (see, e.g., \cite{MRaghavan}). To this end, we will first show that
the interaction between an algorithm and the oracles can be
assumed to be ``discrete'', which in turn will imply that effectively
there is only a finite number of deterministic algorithms that make at
most $q$ queries. The discretization of the oracles also serves a
second purpose: that we can see deterministic algorithms as finite
decision trees and use counting arguments to show a lower bound on the 
query complexity.

Fix a body $K$ from $\familycc{C}{C'}$.
Suppose that a randomized algorithm has access to the following
discretizations of the oracles:
\begin{itemize}
\item A discrete random oracle that generates a random point $X = (X_1, \dotsc,
X_k)$ from $K = \prod_i K_i$ and, for each $i \in [k]$
outputs whether $X_i$ lies in
the corresponding cross-polytope or in which peak it lies.
\item A discrete membership oracle that when given a
sequence of indices of peaks $I = (i_1, \dotsc, i_k)$ outputs, for
each $i \in [k]$, whether peak $i$ is present in $K_i$.
\end{itemize}

\emph{Claim:}
A randomized algorithm with access to discrete versions of the oracles can
simulate a randomized algorithm with access to continuous oracles with the
same number of queries.

\emph{Proof of claim:}
We will show it for bodies in $\family$, i.e. cross polytopes with peaks;
the generalization of this argument to product bodies
in $\familycc{C}{C'}$ is straightforward.
Let $A$ and $B$ be the algorithm with access to the continuous and discrete
oracles respectively. Algorithm $B$ acts as $A$, except when $A$ invokes an oracle,
where $B$ will do as follows: When $A$ makes a query $p$ to
the continuous membership oracle, $B$ will query the peak that contains $p$
(we can assume that $p$ lies in a peak, as otherwise the query provided no
new information).  Now suppose that $A$ makes a query to the continuous random
oracle and gets a point $p$.  Then $B$ makes a query to the discrete random
oracle.
$B$ then generates a uniformly random point $p'$ in the region that it got
from the oracle.  Clearly $p'$ has the same distribution as $p$, namely
uniform distribution on the body.  

If we see deterministic algorithms as decision trees,
it is clear that there are only a finite number of deterministic algorithms
that make at most $q$ queries to the discrete oracles of $K$.
Thus, by Yao's principle, for any distribution $\mathcal{D}$ on inputs,
the probability of error of any randomized algorithm
against $\mathcal{D}$ is at least the probability of error
of the best deterministic algorithm against $\mathcal{D}$.

Our hard input distribution $\mathcal{D}$ is the uniform distribution over $\familycc{C}{C'}$. 
Now, in the decision tree associated to a deterministic algorithm,
each node associated to a membership query has two children 
(either the query point is in the body or not), 
while a node associated to a random sample has at most 
$(2^n+1)^k$ children (the random sample can lie in one of the 
$2^n$ peaks or in $O_n$, for each factor in the product body).
Thus, if the algorithm makes at most $q$ queries in total, then the
decision tree has at most $(2^n+1)^{kq}$ leaves. These leaves induce
a partition of the family of inputs $\familycc{C}{C'}$.  By Lemma~\ref{lem:distance},
the distance between any pair of bodies is at least $1 - e^{-k/16 n} = 1 - e^{-d/16 n^2}$, 
where $n$ is chosen so that
the output of the algorithm can be within $\eps$ of at most
one body in each part of the partition. That is,
\[
2 \eps < 1-e^{-k/16 n},
\]
which implies that we should take
\[
    n < 4 \sqrt{\frac{d}{\ln{\frac{1}{1-2\eps}}}}.
\]
As $d=kn$ is a power of 2, we can satisfy the previous inequality
and the integrality constraints of $k$ and $n$ by using our assumption
that $8/d \leq \eps \leq 1/8$ and letting $n$ be a power of 2 such that
\[
  2 \leq \sqrt{\frac{d}{\ln{\frac{1}{1-2\eps}}}} \leq n 
  < 4  \sqrt{\frac{d}{\ln{\frac{1}{1-2\eps}}}} \leq d.
\]
By Lemma~\ref{lem:cardinality}, the total number of bodies is
\[
\card{\familycc{C}{C'}} \geq 2^{c_2 k 2^n}.
\]
This implies that the probability of error is at least
\[
    1-\frac{(2^n+1)^{kq}}{\card{\familycc{C}{C'}}} 
    \geq 1-\frac{(2^n+1)^{kq}}{2^{c_2 k 2^n}}.
\]
If we want this error to be less than a given $\delta$, then
for some $c_3, c_4 > 0$ we need
\begin{align*}
q &\geq c_3 \left(\frac{\log(1-\delta)}{kn} + \frac{2^n}{n}  \right)\\
    &\geq c_3 \left( \frac{\log(1-\delta)}{d} + \frac{1}{\sqrt{d}}\sqrt{\log{\frac{1}{1-2\eps}}} \cdot 2^{\sqrt{\frac{d}{\log{\frac{1}{1-2\eps}}}}}\right) \\
    &\geq c_3 \left( \frac{\log(1-\delta)}{d} + \sqrt{\frac{\eps}{d}} \cdot 2^{\sqrt{\frac{d}{\log{\frac{1}{1-2\eps}}}}}\right).
\end{align*}
For $\delta =1/2$ and $\eps\leq 1/4$ this implies
\[
    q \geq 2^{\Omega(\sqrt{d/\eps})}.
\]
\end{proof}

\section{Discussion}\label{sec:discussion}

Informally, our construction of $\familycc{C}{C'}$ can be thought of as ``codes'' in $\RR^d$, namely sets in 
$\RR^d$ that are far from each other; the difficulty in the construction of such codes
comes from the requirements of convexity and that the distributions of polynomially many random samples
look alike.  
By using slightly more involved arguments we can handle $\eps$ arbitrarily close to $1$ and prove a 
similar lower bound.  It is not clear if such a lower bound is possible for other learning settings that
have been studied in the past, e.g. labeled samples from Gaussian distribution.  
Unlike that setting, we do not know a matching upper bound for learning convex
bodies in our model.  

Our construction of the hard family is more ``explicit'' than that of \cite{KlivansOS08}:  The hard family 
they construct is obtained by a probabilistic argument; our construction can be made explicit by using good
error correcting codes.  

\medskip

We mention here a somewhat surprising corollary of our result, without detailed proof or precise numerical constants.
Informally, it shows instability of the reconstruction of a convex body as a function of the volume
of its intersection with halfspaces, relative to its volume. It is an exercise to see that knowledge of 
$|K \cap H|/|K|$ for every halfspace $H$ uniquely determines $K$.
Moreover, given $d^c$ (for some fixed constant $c > 0$) random 
samples from a convex body $K \subseteq \RR^d$, with high probability we can estimate $\vol{K \cap H}/|K|$ for 
\emph{all} half-spaces $H$ within additive error of $O(1/d^{c'})$, where $c'$ is a positive constant depending on $c$.  
This can be proved using standard arguments about
$\epsilon$-approximations and the fact that the VC-dimension of halfspaces in $\RR^d$ is $d+1$.  

We say that two convex bodies $K$ and  $L$ are $\alpha$-\emph{halfspace-far} if there is a halfspace $H$ such that 
$||K \cap H|/|K| - |L \cap H|/|L||> \alpha$.  Thus if we choose some $t < c'$ and $K$ and $L$ are $1/d^t$-halfspace-far, 
then we can detect this 
using $d^c$ random points, with high probability.  Now, we claim that there is a pair of bodies in $\familycc{C}{C'}$ 
that is not far.  For otherwise, all pairs would be far and we would be able to distinguish every body in $\familycc{C}{C'}$ 
from every other body in $\familycc{C}{C'}$ with a sample of size $d^c$, 
and thus learn it.  But as we have proved, this is impossible.  So we can conclude 
that there are two bodies in $\familycc{C}{C'}$ that are not $1/d^t$-halfspace-far, i.e. they are $1/d^t$-halfspace-close.  
This gives:
\begin{corollary}
For any constant $t>0$ and sufficiently large $d$ there exist two convex bodies 
$K,L \subseteq \RR^d$ such that $K,L$ are $1/d^t$-halfspace-close: for every halfspace $H$
\[
\lrabs{\frac{\vol{K \cap H}}{\vol{K}} - \frac{\vol{L \cap H}}{\vol{L}}} \leq \frac{1}{d^t},
\]
but $\dist(K, L) > 1/8$.
\end{corollary}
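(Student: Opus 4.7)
The plan is to derive the corollary from Theorem~\ref{thm:main} by contradiction, using standard VC-dimension machinery to turn halfspace-closeness into distinguishability from samples. The first step is to recall that halfspaces in $\RR^d$ form a family of VC-dimension $d+1$, so a uniform sample of size $d^c$ from any convex body $K$ yields an $\eps$-approximation with $\eps = O(d^{-c'})$ simultaneously for all halfspaces $H$; that is, with high probability the empirical estimates of $\vol{K\cap H}/\vol{K}$ are accurate to additive error $1/d^{c'}$, and $c'$ can be made to exceed any preassigned constant by taking $c$ large.

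Given $t>0$, I would pick $c$ so that the resulting $c'$ satisfies $c' > t$. Suppose for contradiction that every pair of distinct bodies in $\familycc{C}{C'}$ is $1/d^t$-halfspace-far; then the signature $H\mapsto \vol{K\cap H}/\vol{K}$ separates every pair of members of $\familycc{C}{C'}$ by a gap exceeding $2/d^{c'}$. Consequently, from $d^c$ random samples one can with high probability recover the input $K$ exactly among the finitely many (by Lemma~\ref{lem:cardinality}) members of $\familycc{C}{C'}$, simply by choosing the unique codeword whose halfspace signature is consistent with the empirical estimates.

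Now I would invoke Lemma~\ref{lem:distance}: tuning $k$ and $n$ so that $1-e^{-k/(16n)} > 1/4$, every pair in $\familycc{C}{C'}$ has $\dist$-distance bounded away from $0$, so exact identification of $K$ is in particular $(1/8)$-approximate learning of $K$. But by Theorem~\ref{thm:main} any such learner must use $2^{\Omega(\sqrt d)}$ samples, contradicting the polynomial sample size $d^c$. Hence some pair $K,L \in \familycc{C}{C'}$ must \emph{fail} to be $1/d^t$-halfspace-far, yielding the desired halfspace-close pair that nonetheless satisfies $\dist(K,L) > 1/8$ by construction. The only delicate point is calibrating constants so that $d^c$ samples simultaneously beat additive accuracy $1/d^t$ and stay below Theorem~\ref{thm:main}'s superpolynomial threshold; this is routine because $d^c$ is polynomial while the theorem's lower bound is $2^{\Omega(\sqrt d)}$, and I do not anticipate any genuine obstacles beyond this bookkeeping.
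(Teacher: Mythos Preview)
Your proposal is correct and follows essentially the same argument as the paper's own discussion preceding the corollary: use the VC-dimension of halfspaces to get uniform $1/d^{c'}$-accurate empirical halfspace ratios from $d^c$ samples, argue by contradiction that if every pair in $\familycc{C}{C'}$ were $1/d^t$-halfspace-far then $d^c$ samples would suffice to identify (hence learn) the body, and invoke Theorem~\ref{thm:main} to obtain the contradiction. Your explicit mention of tuning $k,n$ via Lemma~\ref{lem:distance} and of the ``unique consistent codeword'' recovery rule spells out a couple of steps the paper leaves implicit, but the structure and ingredients are the same.
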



An earlier version of this manuscript mentioned the problem of whether the volume of a convex body in $\mathbb{R}^d$
can be estimated from $\poly(d)$ uniformly random samples. 
Very recently, Ronen Eldan~\cite{Eldan09} has answered this in the negative. 
His result provides a probabilistic construction of a family of convex bodies such that the volume a random body from 
this family is hard to estimate from random samples.  
His result does not supersede ours in the sense that our lower bound of $2^{\Omega(\sqrt{d})}$ is stronger, and perhaps 
optimal, and our construction of the hard family is explicit.

It is known that if the convex body is a polytope with $\poly(d)$ facets, then it can be learned with 
$\poly(d)$ uniformly random samples~\cite{KV07} in an information-theoretical sense.  
However, whether this can be done efficiently remains open:
\begin{problem}
Can one learn polytopes with $\poly(d)$ facets from $\poly(d)$ uniformly random (over the polytope) samples in 
$\poly(d)$ time?
\end{problem}

\bibliography{learning}
\bibliographystyle{abbrv}    

\end{document}